\date{\vspace{-0.3in}}
\newtheorem{theorem}{Theorem}
\newtheorem{corollary}[theorem]{Corollary}
\newtheorem{lemma}[theorem]{Lemma}
\newtheorem{condition}{Condition}
\newcommand{\sss}{\hspace{-0.015in}}
\begin{document}
\title{Information-Theoretic Lower Bounds for \\ Recovery of Diffusion Network Structures}

\author{
Keehwan Park\\
Department of Computer Science\\
Purdue University\\
park451@purdue.edu
\and
Jean Honorio\\
Department of Computer Science\\
Purdue University\\
jhonorio@purdue.edu
}

\maketitle

\begin{abstract}
We study the information-theoretic lower bound of the sample complexity of the correct recovery of diffusion network structures. We introduce a discrete-time diffusion model based on the Independent Cascade model for which we obtain a lower bound of order $\Omega(k \log p)$, for directed graphs of $p$ nodes, and at most $k$ parents per node. Next, we introduce a continuous-time diffusion model, for which a similar lower bound of order $\Omega(k \log p)$ is obtained. Our results show that the algorithm of \cite{pouget2015inferring} is statistically optimal for the discrete-time regime. Our work also opens the question of whether it is possible to devise an optimal algorithm for the continuous-time regime.
\end{abstract}

\section{Introduction}
In recent years, the increasing popularity of online social network services, such as Facebook, Twitter, and Instagram, allows researchers to access large influence propagation traces. Since then, the influence diffusion on social networks has been widely studied in the data mining and machine learning communities. Several studies showed how influence propagates in such social networks as well as how to exploit this effect efficiently. Domingos et al. \cite{domingos2001mining} first explored the use of social networks in viral marketing. Kempe et al. \cite{kempe2003maximizing} proposed the influence maximization problem on the Independent Cascade (IC) and Linear Threshold (LT) models, assuming all influence probabilities are known. \cite{Saito:2008:PID:1430307.1430318,goyal2010learning} studied the learning of influence probabilities for a known (fixed) network structure. \par
The network inference problem consists in discovering the underlying functional network from cascade data. The problem is particularly important since regardless of having some structural side information, e.g., friendships in online social networks, the functional network structure, which reflects the actual influence propagation paths, may look greatly different. Adar et al. \cite{1517844} first explored the problem of inferring the underlying diffusion network structure. The subsequent researches \cite{myers2010convexity,gomez2010inferring} have been done in recent years and the continuous-time extensions \cite{saito2009learning,gomez11netrate,du13nips} have also been explored in depth. \par

\textbf{Basic diffusion model. }
Consider a directed graph, $\mathcal{G}=(\mathcal{V},\mathcal{E})$ where $\mathcal{V}=\{1,\ldots,p\}$ is the set of nodes and $\mathcal{E}$ is the set of edges. Next, we provide a short description for the discrete-time IC model \cite{kempe2003maximizing}. Initially we draw an initial set of active nodes from a source distribution. The process unfolds in discrete steps. When node $j$ first becomes active at time $t$, it independently makes a single attempt to activate each of its outgoing, inactive neighbors $i$, with probability $\theta_{j,i}$. If $j$ succeeds, then $i$ will become active at time $t+1$. If $j$ fails, then it makes no further attempts to activate $i$. And this process runs until no more activations are possible. \par

\textbf{Related works. }
Research on the sample complexity of the network inference problem is very recent \cite{netrapalli2012learning,abrahao2013trace,daneshmand2014estimating,narasimhan2015learnability,pouget2015inferring}. 
Netrapalli et al. \cite{netrapalli2012learning} studied the network inference problem based on the discrete-time IC model and showed that for graphs of $p$ nodes and at most $k$ parents per node,  $\mathcal{O}(k^2 \log p)$ samples are sufficient, and $\Omega(k \log p)$ samples are necessary. However, as Daneshmand et al.\cite{daneshmand2014estimating} have pointed out, their model only considers the discrete-time diffusion model and the correlation decay condition is rather restrictive since it limits the number of new activations at every step. Abrahao et al. \cite{abrahao2013trace} proposed the First-Edge algorithm to solve the network inference problem and also suggested lower bounds but their results are specific to their algorithm, i.e., the lower bounds are not information-theoretic. \par
In \cite{daneshmand2014estimating}, Daneshmand et al. worked on the continuous-time network inference problem with $\ell$-1 regularized maximum likelihood estimation and showed that $\mathcal{O}(k^3 \log p)$ samples are sufficient, using the \textit{primal-dual witness} method. Narasimhan et al. \cite{narasimhan2015learnability} explored various influence models including IC, LT, and Voter models under the Probably Approximately Correct learning framework. Pouget-Abadie et al. \cite{pouget2015inferring} studied various discrete-time models under the restricted eigenvalue conditions. They also proposed the first algorithm which recovers the network structure with high probability in $\mathcal{O}(k \log p)$ samples. \par
It is important to note that, as we will see later in the paper, we show information-theoretic lower bounds of order $\Omega(k \log p)$, confirming that the algorithm in \cite{pouget2015inferring} is statistically optimal. However, since their algorithm only considered discrete-time models, developing a new algorithm for continuous-time models with the sufficient condition on the sample complexity of order $\mathcal{O} (k \log p)$ can be an interesting future work.

\section{Ensemble of Discrete-time Diffusion Networks}
Lower bounds of the sample complexity for general graphs under the IC and LT models \cite{kempe2003maximizing} seem to be particularly difficult to analyze. In this paper, we introduce a simple network under IC model, which fortunately allow us to show sample complexity lower bounds that match the upper bounds found in \cite{pouget2015inferring} for discrete-time models.
\subsection{A simple two-layer network}
Here we considered the two-layer IC model shown in Figure~\ref{fig_simple_model}. Although not realistic, the considered model allows to show that even in this simple two-layer case, we require $\Omega(k \log p)$ samples in order to avoid network recovery failure.

In Figure~\ref{fig_simple_model}, each circle indicates a node and each edge $(j,i)$ with its influence probability $\theta$ indicates that a cascade can be propagated from node $j$ to $i$ or equivalently node $j$ activates $i$ with probability $\theta$. The model assumes that there exists a super source node $s_1$, which is already activated at time zero and at time $1$, it independently tries to activate $p$ parent nodes with probability $\theta_0$ and $s_2$ with probability 1. There exist a child node $p+1$, which has exactly $k+1$ parents including $s_2$. Then at time $2$, $s_2$ and all direct parents of $p+1$, which have been activated at time $1$, independently try to activate the child node $p+1$ with probability $\theta_0$ and $\theta$, respectively. We use $t_i=\infty$ to indicate that a node $i$ has not been activated during the cascading process. Note that these influence probabilities can be generalized without too much effort. \par

Given the model with unknown edges between parent nodes and the child node $p+1$, and a set of $n$ samples $\bm{t^{(1)}}, \bm{t^{(2)}}, \ldots, \bm{t^{(n)}} \in \{1,\infty\}^p \times \{2,\infty\}$, the goal of the learner is to recover the $k$ edges or equivalently to identify the $k \ll p$ direct parents of the child node $p+1$. Each sample is a ($p+1$)-dimensional vector, $\bm{t}=(t_1, \ldots, t_p, t_{p+1})$, and includes all the activation times of the parent and child nodes. A parent node $i \in \{1,\ldots,p\}$ is either activated at time 1 (i.e., $t_i = 1$) or not (i.e.,$ t_i = \infty$). The child node $p+1$ is either activated at time 2 (i.e., $t_{p+1}=2$) or not (i.e., $t_{p+1}=\infty$).

Now, we define the hypothesis class $\mathcal{F}$ as the set of all combinations of $k$ nodes from $p$ possible parent nodes, that is $|\mathcal{F}| := {p \choose k}$. Thus, a hypothesis $\pi$ is the set of $k$ parent nodes such that $\forall i \in \pi$, there exist an edge from $i$ to $p+1$ with influence probability $\theta$. We also let $\pi^c := \{1, \ldots, p\} \backslash \pi$ to be the complement set of $\pi$.
Given a hypothesis $\pi$ and a sample $\bm{t}$, we can write a data likelihood using independence assumptions.
\begin{align}
\mathbb{P}(\bm{t} ; \pi) & = \mathbb{P}(\bm{t}_{\pi}) \mathbb{P}(\bm{t}_{\pi^c}) \mathbb{P}(t_{p+1} | \bm{t}_{\pi} t_{s_2}) \label{eq_likelihood}
\end{align}
The conditional probability can be expressed as follows.
\begin{align*}
\mathbb{P}(t_{p+1} = 2 | \bm{t}_{\pi} t_{s_2}) & = 1-(1-\theta)^{\sum_{i \in \pi} \mathbbm{1}[t_i=1]}(1-\theta_0)\\
\mathbb{P}(t_{p+1} = \infty | \bm{t}_{\pi} t_{s_2}) & = (1-\theta)^{\sum_{i \in \pi} \mathbbm{1}[t_i=1]}(1-\theta_0)
\end{align*}
where $\mathbbm{1}[\cdot]$ is an indicator function. Lastly, for simplicity, we define,
\begin{align}
\theta := 1-\theta_0^{\frac{1}{k}} \label{eq_theta_def}
\end{align}
which decreases as the child node $p+1$ has more parents. The latter agrees with the intuition that as we have more parents, the chance of a single parent activating the child node gets smaller.
\par

We will study the information-theoretic lower bounds on the sample complexity of the network inference problem. We will use Fano's inequality in order to analyze the necessary number of samples for any conceivable algorithm in order to avoid failure.

\begin{figure}[t!]
    \centering
        \includegraphics[width=2.5in]{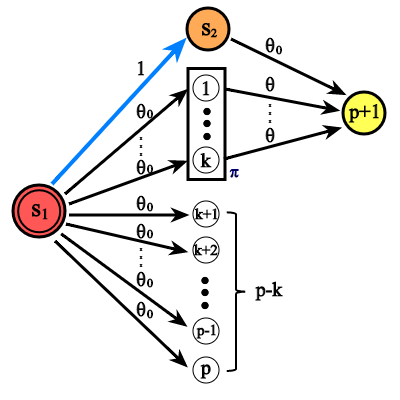}
    \caption{Diffusion Model with Two Layers.}
    \label{fig_simple_model}
\end{figure}

\subsection{Lower Bounds with Fano's inequality}
First, we will bound the mutual information by using a pairwise Kullback-Leibler (KL) divergence-based bound \cite{Yu97}, and show the following lemma.
\begin{lemma}
\label{lemma_kl_bound1}
Under the settings of the discrete-time diffusion model, for any pair of hypotheses, $\pi, \pi' \in \mathcal{F}$,
\begin{align*}
\mathbb{KL}(\mathcal{P}_{\bm{t}|\pi} || \mathcal{P}_{\bm{t}|\pi'}) \le \log\frac{1}{\theta_0}
\end{align*}
\end{lemma}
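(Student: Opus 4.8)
The plan is to use the factorization in \eqref{eq_likelihood} to collapse the divergence onto the single child coordinate $t_{p+1}$, and then bound the resulting Bernoulli divergence pointwise in the parents' activation times.

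First I would observe that the marginal law of the parent activation times $(t_1,\dots,t_p)$ is identical under every hypothesis: each parent is activated at time $1$ independently with probability $\theta_0$ regardless of $\pi$, so $\mathbb{P}(\bm{t}_\pi)\mathbb{P}(\bm{t}_{\pi^c}) = \prod_{i=1}^{p}\mathbb{P}(t_i)$ does not depend on $\pi$. Writing $\mathbb{KL}(\mathcal{P}_{\bm{t}|\pi}\|\mathcal{P}_{\bm{t}|\pi'}) = \mathbb{E}_{\bm{t}\sim\mathcal{P}_{\bm{t}|\pi}}\!\left[\log\frac{\mathbb{P}(\bm{t};\pi)}{\mathbb{P}(\bm{t};\pi')}\right]$ and cancelling this common factor leaves a conditional KL divergence,
\begin{align*}
\mathbb{KL}(\mathcal{P}_{\bm{t}|\pi}\|\mathcal{P}_{\bm{t}|\pi'}) = \mathbb{E}_{\bm{t}_{1:p}}\big[\mathbb{KL}(\mathrm{Bern}(q)\,\|\,\mathrm{Bern}(q'))\big],
\end{align*}
where, conditioned on the parents, $q := (1-\theta)^{m}(1-\theta_0)$ and $q' := (1-\theta)^{m'}(1-\theta_0)$ are the probabilities of the event $t_{p+1}=\infty$ under $\pi$ and $\pi'$, with $m := \sum_{i\in\pi}\mathbbm{1}[t_i=1]$ and $m' := \sum_{i\in\pi'}\mathbbm{1}[t_i=1]$.

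Next I would bound $\mathbb{KL}(\mathrm{Bern}(q)\|\mathrm{Bern}(q')) = q\log\frac{q}{q'} + (1-q)\log\frac{1-q}{1-q'}$ by $\log\frac{1}{\theta_0}$ for every realization of $\bm{t}_{1:p}$, using two elementary facts. Since $|m-m'|\le k$ and $1-\theta = \theta_0^{1/k}$, the ratio $q/q' = \theta_0^{(m-m')/k}$ lies in $[\theta_0,\,1/\theta_0]$, so $\log\frac{q}{q'}\le\log\frac{1}{\theta_0}$. Since $q'\ge\theta_0(1-\theta_0)$ while $q\le 1-\theta_0$, the ratio $\frac{1-q}{1-q'}$ is at most $\frac{1-\theta_0(1-\theta_0)}{\theta_0}\le\frac{1}{\theta_0}$ (the last step because $\theta_0\le 1$), so $\log\frac{1-q}{1-q'}\le\log\frac{1}{\theta_0}$. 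Each logarithm is therefore at most $\log\frac{1}{\theta_0}$; weighting by the nonnegative coefficients $q$ and $1-q$ (and noting that a negative logarithm only helps) gives $\mathbb{KL}(\mathrm{Bern}(q)\|\mathrm{Bern}(q'))\le (q+(1-q))\log\frac{1}{\theta_0} = \log\frac{1}{\theta_0}$. Taking expectations over $\bm{t}_{1:p}$ preserves this pointwise bound and yields the lemma.

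The conceptual crux — and the reason the bound comes out so clean — is the cancellation of the hypothesis-independent parent marginal, which reduces the problem to a one-coordinate Bernoulli KL computation; after that the ratio estimates are routine. The only mild bookkeeping issue is the sign of each logarithm (whether $m < m'$ or $m \ge m'$), but since every term is dominated by its nonnegative weight times $\log\frac{1}{\theta_0}$, no genuine case analysis is needed.
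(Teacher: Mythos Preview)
Your argument is correct, and it takes a somewhat different route from the paper. The paper bounds the KL divergence by $\log\bigl(\max_{\bm t}\mathbb{P}(\bm t|\pi)/\mathbb{P}(\bm t|\pi')\bigr)$ via Jensen's inequality (i.e.\ $\mathbb{E}[\log R]\le\log\mathbb{E}[R]\le\log\max R$), then cancels the parent marginals inside the ratio and maximizes the conditional likelihood ratio over $\bm t$, after first arguing informally that the worst case is $\pi\cap\pi'=\emptyset$. You instead cancel the parent marginals \emph{before} introducing any inequality, reducing the problem exactly to an expected Bernoulli KL, and then bound that Bernoulli KL pointwise by controlling $q/q'$ and $(1-q)/(1-q')$. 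The ratio estimates you use in the end are essentially the same as the paper's, but your path avoids both the Jensen step and the disjointness reduction, and it makes transparent why the bound is uniform over all pairs $\pi,\pi'$ (through $|m-m'|\le k$). The paper's approach, on the other hand, generalizes more mechanically to the continuous-time setting of Lemma~\ref{lemma_kl_bound2}, where the child coordinate is no longer Bernoulli and the $\log\max R$ device still applies verbatim.
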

\begin{proof}
First, we notice that the maximum KL divergence between two distributions, $\mathcal{P}_{\bm{t}|\pi}$ and $\mathcal{P}_{\bm{t}|\pi'}$ can be achieved when the two sets, $\pi$ and $\pi'$, do not share any node, or equivalently, when there is not any overlapping edge between parent and child nodes. That is, $\pi \cap \pi' = \emptyset$. \par
Then we compute the KL divergence with the two disjoint parent sets, as follows
\begin{align*}
\mathbb{KL}(\mathcal{P}_{\bm{t}|\pi} || \mathcal{P}_{\bm{t}|\pi'}) & = \sum_{\bm{t} \in \{1,\infty\}^p \times \{2,\infty\}} \mathbb{P}(\bm{t}|\pi) \log \frac{\mathbb{P}(\bm{t}|\pi)}{\mathbb{P}(\bm{t}|\pi')}
\end{align*}
Using Jensen's inequality and Eq~(\ref{eq_likelihood}), we have
\begin{align}
\mathbb{KL}(& \mathcal{P}_{\bm{t}|\pi} || \mathcal{P}_{\bm{t}|\pi'}) \le \log \bigg( \sum_{\bm{t} \in \{1,\infty\}^p \times \{2,\infty\}} \mathbb{P}(\bm{t}|\pi) \frac{\mathbb{P}(\bm{t}|\pi)}{\mathbb{P}(\bm{t}|\pi')} \bigg) \nonumber \\
& \le \log \bigg( \max_{\bm{t} \in \{1,\infty\}^p \times \{2,\infty\}} \frac{\mathbb{P}(\bm{t}|\pi)}{\mathbb{P}(\bm{t}|\pi')} \bigg) \nonumber\\
& = \log \Bigg( \max_{\bm{t} \in \{1,\infty\}^p \times \{2,\infty\}} \frac{\mathbb{P}(\bm{t}_{\pi}) \mathbb{P}(\bm{t}_{\pi^c}) \mathbb{P}(t_{p+1} | \bm{t}_{\pi} t_{s_2})}{\mathbb{P}(\bm{t}_{\pi'}) \mathbb{P}(\bm{t}_{\pi'^c}) \mathbb{P}(t_{p+1} | \bm{t}_{\pi'} t_{s_2})} \Bigg) \nonumber\\
& = \log \Bigg( \max_{\bm{t} \in \{1,\infty\}^p \times \{2,\infty\}} \frac{\mathbb{P}(t_{p+1} | \bm{t}_{\pi} t_{s_2})}{\mathbb{P}(t_{p+1} | \bm{t}_{\pi'} t_{s_2})} \Bigg) \label{eq_main1}
\end{align}
Now as we have argued earlier, the maximum value can be attained when $\pi \cap \pi' = \emptyset$. Without loss of generality, we assume that $\pi$ connects the first $k$ nodes to $p+1$ and $\pi'$ connects the subsequent $k$ nodes to $p+1$.
Thus we have
\begin{align*}
\frac{\mathbb{P}(t_{p+1}=2 | \bm{t}_{\pi} t_{s_2})}{\mathbb{P}(t_{p+1}=2 | \bm{t}_{\pi'} t_{s_2})} \le \frac{1-(1-\theta)^{\sum_{i=1}^{k} \mathbbm{1}[t_i=1]}(1-\theta_0)}{1-(1-\theta)^{\sum_{i=k+1}^{2k} \mathbbm{1}[t_i=1]}(1-\theta_0)}
\end{align*}
Similarly, we have
\begin{align*}
\frac{\mathbb{P}(t_{p+1}=\infty | \bm{t}_{\pi} t_{s_2})}{\mathbb{P}(t_{p+1}=\infty | \bm{t}_{\pi'} t_{s_2})} \le \frac{(1-\theta)^{\sum_{i=1}^{k} \mathbbm{1}[t_i=1]}(1-\theta_0)}{(1-\theta)^{\sum_{i=k+1}^{2k} \mathbbm{1}[t_i=1]}(1-\theta_0)}
\end{align*}
We can use the above expressions in order to obtain an upper bound for Eq~(\ref{eq_main1}).
Thus, by Eq~(\ref{eq_theta_def}) we have
\begin{align*}
& \mathbb{KL}( \mathcal{P}_{\bm{t}|\pi} || \mathcal{P}_{\bm{t}|\pi'}) \\
& \le \log \Bigg( \max \bigg\{ \frac{1-(1-\theta)^k (1-\theta_0)}{\theta_0}, \frac{1-\theta_0}{(1-\theta)^k(1-\theta_0)} \bigg\} \Bigg) \\
& \le \log \bigg( \frac{1}{\theta_0} \bigg)
\end{align*}
\end{proof}

By using the above results, we show that the necessary number of samples for the network inference problem is $\Omega(k \log p)$.
\begin{theorem}
\label{thm_fano1}
Suppose that nature picks a ``true'' hypothesis $\bar\pi$ uniformly at random from some distribution of hypotheses with support $\mathcal{F}$. Then a dataset $S$ of $n$ independent samples $\bm{t^{(1)}}, \bm{t^{(2)}}, \ldots, \bm{t^{(n)}} \in \{1,\infty\}^p \times \{2,\infty\}$ is produced, conditioned on the choice of $\bar\pi$. The learner then infers $\hat\pi$ from the dataset $S$. Under the settings of the two-layered discrete-time diffusion model, there exists a network inference problem of $k$ direct parent nodes such that if $n \le \frac{k\log{p} -k\log{k} - 2\log{2}}{2 \log \frac{1}{\theta_0}}$, then learning fails with probability at least $1/2$, i.e.,
\begin{align*}
\mathbb{P}[\hat\pi \ne \bar\pi] \ge \frac{1}{2}
\end{align*}
for any algorithm that a learner could use for picking $\hat\pi$.
\end{theorem}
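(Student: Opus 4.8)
The plan is to apply Fano's inequality to the ensemble of distributions $\{\mathcal{P}_{\bm{t}|\pi}\}_{\pi\in\mathcal{F}}$ arising from the two-layer network of Figure~\ref{fig_simple_model}, taking nature's prior to be \emph{uniform} over $\mathcal{F}$ so that the prior entropy is $H(\bar\pi)=\log|\mathcal{F}|=\log\binom{p}{k}$. In its weak form, Fano's inequality states that for any estimator $\hat\pi$ depending only on the dataset $S=(\bm{t^{(1)}},\ldots,\bm{t^{(n)}})$,
\begin{align*}
\mathbb{P}[\hat\pi\ne\bar\pi]\ \ge\ 1-\frac{I(\bar\pi;S)+\log 2}{\log\binom{p}{k}}.
\end{align*}
Hence it suffices to upper bound the mutual information $I(\bar\pi;S)$, to lower bound $\log\binom{p}{k}$, and then to require the resulting expression to be at least $1/2$.

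For the mutual information, I would invoke the classical pairwise-KL bound \cite{Yu97}: with $\mathcal{P}_S=\tfrac{1}{|\mathcal{F}|}\sum_{\pi'\in\mathcal{F}}\mathcal{P}_{S|\pi'}$ the mixture over hypotheses, convexity of $\mathbb{KL}(\cdot\,\|\,\cdot)$ in its second argument gives
\begin{align*}
I(\bar\pi;S)=\frac{1}{|\mathcal{F}|}\sum_{\pi\in\mathcal{F}}\mathbb{KL}(\mathcal{P}_{S|\pi}\,\|\,\mathcal{P}_S)\ \le\ \frac{1}{|\mathcal{F}|^2}\sum_{\pi,\pi'\in\mathcal{F}}\mathbb{KL}(\mathcal{P}_{S|\pi}\,\|\,\mathcal{P}_{S|\pi'})\ \le\ \max_{\pi,\pi'\in\mathcal{F}}\mathbb{KL}(\mathcal{P}_{S|\pi}\,\|\,\mathcal{P}_{S|\pi'}).
\end{align*}
Because the $n$ observations are i.i.d.\ conditioned on $\bar\pi$, the KL divergence tensorizes, $\mathbb{KL}(\mathcal{P}_{S|\pi}\,\|\,\mathcal{P}_{S|\pi'})=n\,\mathbb{KL}(\mathcal{P}_{\bm{t}|\pi}\,\|\,\mathcal{P}_{\bm{t}|\pi'})$, and Lemma~\ref{lemma_kl_bound1} bounds each single-sample term by $\log\tfrac{1}{\theta_0}$. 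Therefore $I(\bar\pi;S)\le n\log\tfrac{1}{\theta_0}$.

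For the cardinality term, I would use the elementary bound $\binom{p}{k}\ge(p/k)^k$, i.e.\ $\log\binom{p}{k}\ge k\log p-k\log k$. Substituting both estimates into Fano's inequality yields
\begin{align*}
\mathbb{P}[\hat\pi\ne\bar\pi]\ \ge\ 1-\frac{n\log\tfrac{1}{\theta_0}+\log 2}{k\log p-k\log k},
\end{align*}
and forcing the right-hand side to be at least $1/2$ is, after clearing the factor $1/2$ and rearranging, exactly the stated hypothesis $n\le\frac{k\log p-k\log k-2\log 2}{2\log\tfrac{1}{\theta_0}}$. The substantive analytic work has already been discharged in Lemma~\ref{lemma_kl_bound1}; the only points needing care here are (a) choosing the uniform prior so that $H(\bar\pi)=\log|\mathcal{F}|$ rather than something smaller, (b) the i.i.d.\ tensorization of the KL divergence across the $n$ samples, and (c) tracking constants through $\binom{p}{k}\ge(p/k)^k$ so that the additive $\log 2$ in Fano's bound becomes $2\log 2$ once we multiply through by $2$. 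I do not anticipate a genuine obstacle beyond this bookkeeping.
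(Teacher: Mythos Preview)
Your proposal is correct and follows essentially the same route as the paper: Fano's inequality combined with the pairwise KL bound of \cite{Yu97}, tensorization over the $n$ i.i.d.\ samples, Lemma~\ref{lemma_kl_bound1} for the single-sample KL, and the elementary estimate $\log\binom{p}{k}\ge k(\log p-\log k)$. The only cosmetic difference is that you pass to the maximum over pairs rather than staying with the average, but since Lemma~\ref{lemma_kl_bound1} bounds every pairwise KL uniformly this makes no difference.
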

\begin{proof}
We first bound the mutual information by the pairwise KL-based bound \cite{Yu97}.
\begin{align*}
\mathbb{I}(\bar\pi, S) & < \frac{1}{|\mathcal{F}|^2}\sum_{\pi \in \mathcal{F}} \sum_{\pi' \in \mathcal{F}} \mathbb{KL}(\mathcal{P}_{S|\pi} || \mathcal{P}_{S|\pi'})\\
& = \frac{n}{|\mathcal{F}|^2} \sum_{\pi \in \mathcal{F}} \sum_{\pi' \in \mathcal{F}} \mathbb{KL}(\mathcal{P}_{\bm{t}|\pi} || \mathcal{P}_{\bm{t}|\pi'})
\end{align*}
Now from Lemma~\ref{lemma_kl_bound1}, we can bound the mutual information as follows.
\begin{align}
\mathbb{I}(\bar\pi, S) < n \log\frac{1}{\theta_0} \label{eq_mut_info1}
\end{align}
Finally, by Fano's inequality~\cite{Cover06}, Eq~(\ref{eq_mut_info1}), and the well-known bound, $\log{p \choose k} \ge k(\log{p} - \log{k})$, we have
\begin{align*}
&& \mathbb{P}[\hat{f} \ne \bar{f}] & \ge 1- \frac{n \log \frac{1}{\theta_0} + \log{2}}{\log{p \choose k}} \\
&& & \ge 1- \frac{n \log \frac{1}{\theta_0} + \log{2}}{k(\log{p} - \log{k})} \\
&& & = \frac{1}{2}
\end{align*}
By solving the last equality we conclude that, if $n \le \frac{k\log{p} -k\log{k} - 2\log{2}}{2 \log \frac{1}{\theta_0}}$, then any conceivable algorithm will fail with a large probability, $\mathbb{P}[\hat{\pi} \ne \bar{\pi}] \ge 1/2$.
\end{proof} \par

\section{Ensemble of Continuous-time Diffusion Networks}
In this section, we will study the continuous-time extension to the two-layer diffusion model. For this purpose, we introduce a transmission function between parent and child nodes. For the interested readers, Gomez-Rodriguez et al. \cite{gomez11netrate} discuss transmission functions in full detail.
\subsection{A simple two-layer network}
Here we used the same two-layer network structure shown in Figure~\ref{fig_simple_model}. However, for a general continuous model, the activation time for a child node is dependent on the activation times of its parents. For our analysis, we relax this assumption by considering a fixed time range for each layer. In other words, we first consider a fixed time span, $T$. Then the $p$ parent nodes are only activated between $[0,T]$, and the child node $p+1$ is only activated between $[T, 2T]$. Our analysis for the continuous-time model largely borrows from our understanding of the discrete-time model. \par
The continuous-time model works as follows. The super source node $s_1$, tries to activate each of the $p$ parent nodes with probability $\theta_0$, and $s_2$ with probability 1. If a parent node gets activated, it picks an activation time from $[0,T]$ based on the transmission function, $f(t; \pi)$. Then, $s_2$ and all the direct parents, which have been activated in $t \in [0,T]$, independently try to activate the child node $p+1$ with probability $\theta_0$ and $\theta$, respectively. If the child node $p+1$ gets activated, it picks an activation time from $[T, 2T]$ based on the transmission function, $f(t; \pi)$. \par
For the continuous-time model, the conditional probabilities can be expressed as follows.
\begin{align*}
\mathbb{P}(t_{p+1} & \in [T,2T] | \bm{t}_{\pi} t_{s_2}) =\\
& \bigg( 1-(1-\theta)^{\sum_{i \in \pi} \mathbbm{1}[t_i \in [0,T]]}(1-\theta_0) \bigg) \cdot f(t_{p+1}-T; \pi) \\
\mathbb{P}(t_{p+1} & = \infty | \bm{t}_{\pi} t_{s_2}) = (1-\theta)^{\sum_{i \in \pi} \mathbbm{1}[t_i \in [0,T]]}(1-\theta_0)
\end{align*}
Lastly, we define the domain of a sample $\bm{t}$ to be $\mathcal{T}~:=~([0,T]~\cup~\{\infty\})^p \times ([T,2T]~\cup~\{\infty\})$.

\subsection{Boundedness of Transmission Functions}
We will start with the general boundedness of the transmission functions. The constants in the boundedness condition will be later directly related to the lower bound of the sample complexity. In the later part of the paper, we will provide an example for the exponentially distributed transmission function. Often, transmission functions used in the literature fulfill this assumption, e.g., the Rayleigh distribution \cite{daneshmand2014estimating} and the Weibull distribution for $\mu \ge 1$ \cite{kurashima2014probabilistic}.
\begin{condition}[Boundedness of transmission functions]
\label{boundedness_cond}
Suppose $t \in [0,T]$ is a transmission time random variable, dependent on its parents $\pi$. The probability density function $f(t; \pi)$ fulfills the following condition for a pair of positive constants $\kappa_1$ and $\kappa_2$.
\begin{align*}
\min_{t \in [0,T]} f(t; \pi) \ge \kappa_1 > 0\\
\max_{t \in [0,T]} f(t; \pi) \le \kappa_2 < \infty
\end{align*}
\end{condition}

\subsection{Lower Bounds with Fano's inequality}
First, we provide a bound on the KL divergence that will be later used in analyzing the necessary number of samples for the network inference problem.
\begin{lemma}
\label{lemma_kl_bound2}
Under the settings of the continuous-time diffusion model, for any pair of hypotheses, $\pi, \pi' \in \mathcal{F}$,
\begin{align*}
\mathbb{KL}(\mathcal{P}_{\bm{t}|\pi} || \mathcal{P}_{\bm{t}|\pi'}) \le \log \Bigg( \max \bigg\{ \frac{\kappa_2}{\kappa_1} \bigg( \frac{1}{\theta_0}-(1-\theta_0) \bigg), \frac{1}{\theta_0} \bigg\} \Bigg)
\end{align*}
\end{lemma}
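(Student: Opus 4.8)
The plan is to mirror the proof of Lemma~\ref{lemma_kl_bound1}, carrying along the extra transmission-density factor and controlling it with Condition~\ref{boundedness_cond}. As a first step I would argue, exactly as in the discrete case, that the likelihood ratio $\mathbb{P}(\bm{t}\,|\,\pi)/\mathbb{P}(\bm{t}\,|\,\pi')$ is largest over $\bm{t}\in\mathcal{T}$ and over all $\pi,\pi'\in\mathcal{F}$ when $\pi\cap\pi'=\emptyset$: a node lying in both sets contributes the same indicator $\mathbbm{1}[t_i\in[0,T]]$ to the exponents in the numerator and the denominator, so it can only shrink the ratio. Hence it suffices to take, without loss of generality, $\pi=\{1,\dots,k\}$ and $\pi'=\{k+1,\dots,2k\}$.

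Next I would write $\mathbb{KL}(\mathcal{P}_{\bm{t}|\pi}\,\|\,\mathcal{P}_{\bm{t}|\pi'})=\mathbb{E}_{\bm{t}\sim\mathcal{P}_{\bm{t}|\pi}}[\log(\mathbb{P}(\bm{t}|\pi)/\mathbb{P}(\bm{t}|\pi'))]$, where the expectation is now an integral against the natural reference measure on $\mathcal{T}$ (counting measure on the coordinates equal to $\infty$, Lebesgue measure on $[0,T]$ and $[T,2T]$ for the finite ones) and $\mathbb{P}(\bm{t}|\pi)/\mathbb{P}(\bm{t}|\pi')$ is the corresponding Radon--Nikodym density ratio. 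Jensen's inequality together with $\int\mathbb{P}(\bm{t}|\pi)\,(\mathbb{P}(\bm{t}|\pi)/\mathbb{P}(\bm{t}|\pi'))\le\sup_{\bm{t}\in\mathcal{T}}\mathbb{P}(\bm{t}|\pi)/\mathbb{P}(\bm{t}|\pi')$ then gives $\mathbb{KL}(\mathcal{P}_{\bm{t}|\pi}\,\|\,\mathcal{P}_{\bm{t}|\pi'})\le\log(\sup_{\bm{t}\in\mathcal{T}}\mathbb{P}(\bm{t}|\pi)/\mathbb{P}(\bm{t}|\pi'))$, exactly as in Eq~(\ref{eq_main1}). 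As in Lemma~\ref{lemma_kl_bound1}, the parent-activation factors $\mathbb{P}(\bm{t}_\pi)\mathbb{P}(\bm{t}_{\pi^c})$ do not depend on which nodes point to $p+1$, so they coincide under $\pi$ and $\pi'$ and cancel, leaving only the ratio $\mathbb{P}(t_{p+1}\,|\,\bm{t}_\pi\,t_{s_2})/\mathbb{P}(t_{p+1}\,|\,\bm{t}_{\pi'}\,t_{s_2})$ of the child conditionals.

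Finally I would split on the two possibilities for $t_{p+1}$, using the continuous-time conditional-probability formulas for $t_{p+1}$ given above. When $t_{p+1}=\infty$ no transmission factor is present and the computation is identical to the discrete case: the ratio is at most $(1-\theta_0)/((1-\theta)^k(1-\theta_0))=1/\theta_0$, using $(1-\theta)^k=\theta_0$ from Eq~(\ref{eq_theta_def}). When $t_{p+1}\in[T,2T]$ the ratio factorizes as $\frac{1-(1-\theta)^{\sum_{i=1}^{k}\mathbbm{1}[t_i\in[0,T]]}(1-\theta_0)}{1-(1-\theta)^{\sum_{i=k+1}^{2k}\mathbbm{1}[t_i\in[0,T]]}(1-\theta_0)}\cdot\frac{f(t_{p+1}-T;\pi)}{f(t_{p+1}-T;\pi')}$; I would bound the first factor by its extreme value — numerator maximal when all $k$ nodes of $\pi$ are active, denominator minimal when none of $\pi'$ is active — which equals $(1-\theta_0(1-\theta_0))/\theta_0=1/\theta_0-(1-\theta_0)$, and bound the second factor by $\kappa_2/\kappa_1$ via Condition~\ref{boundedness_cond}. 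Taking the larger of the two case-bounds and applying $\log$ yields the claimed inequality. The one genuinely new ingredient, and the step requiring care, is this last point: unlike in the discrete setting the supremum of the density ratio over $\mathcal{T}$ is not finite \emph{a priori}, and it is exactly the uniform positive lower bound $\kappa_1>0$ (with the upper bound $\kappa_2<\infty$) that makes $\sup_{\bm{t}\in\mathcal{T}} f(t_{p+1}-T;\pi)/f(t_{p+1}-T;\pi')\le\kappa_2/\kappa_1<\infty$ and thereby closes the argument; everything else is a direct transcription of the proof of Lemma~\ref{lemma_kl_bound1}.
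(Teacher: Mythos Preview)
Your proposal is correct and follows essentially the same route as the paper's proof: bound the KL by $\log\max_{\bm t}\mathbb P(\bm t|\pi)/\mathbb P(\bm t|\pi')$, cancel the parent-activation factors, reduce to disjoint $\pi,\pi'$, split on $t_{p+1}\in[T,2T]$ versus $t_{p+1}=\infty$, and bound the transmission-density ratio by $\kappa_2/\kappa_1$. If anything you are slightly more careful than the paper in articulating the reference measure on $\mathcal{T}$ and in emphasizing that $\kappa_1>0$ is what keeps the supremum of the density ratio finite.
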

\begin{proof}
We note that the proof is very similar to that of Lemma~\ref{lemma_kl_bound1}.
\begin{align}
\mathbb{KL}(\mathcal{P}_{\bm{t}|\pi} || \mathcal{P}_{\bm{t}|\pi'}) & = \sum_{\bm{t} \in \mathcal{T}} \mathbb{P}(\bm{t}|\pi) \log \frac{\mathbb{P}(\bm{t}|\pi)}{\mathbb{P}(\bm{t}|\pi')} \nonumber \\
& \le \log \bigg( \max_{\bm{t} \in \mathcal{T}} \frac{\mathbb{P}(\bm{t}|\pi)}{\mathbb{P}(\bm{t}|\pi')} \bigg) \nonumber\\
& = \log \Bigg( \max_{\bm{t} \in \mathcal{T}} \frac{\mathbb{P}(t_{p+1} | \bm{t}_{\pi} t_{s_2})}{\mathbb{P}(t_{p+1} | \bm{t}_{\pi'} t_{s_2})} \Bigg) \label{eq_main2}
\end{align}
Now with the same argument we made in Lemma~\ref{lemma_kl_bound1}, consider that $\pi$ connects the first $k$ nodes to $p+1$ and $\pi'$ connects the subsequent $k$ nodes to $p+1$.
Thus, we have
\begin{align*}
& \frac{\mathbb{P}(t_{p+1} \in [T,2T] | \bm{t}_{\pi} t_{s_2})}{\mathbb{P}(t_{p+1} \in [T,2T] | \bm{t}_{\pi'} t_{s_2})} \le \\
&\ \ \ \  \frac{\bigg( 1-(1-\theta)^{\sum_{i=1}^{k} \mathbbm{1}[t_i \in [0,T]]}(1-\theta_0) \bigg) f(t_{p+1}-T; \pi)}{\bigg( 1-(1-\theta)^{\sum_{i=k+1}^{2k} \mathbbm{1}[t_i \in [0,T]]}(1-\theta_0) \bigg) f(t_{p+1}-T; \pi')}
\end{align*}
Similarly, we have
\begin{align*}
\frac{\mathbb{P}(t_{p+1}=\infty | \bm{t}_{\pi} t_{s_2})}{\mathbb{P}(t_{p+1}=\infty | \bm{t}_{\pi'} t_{s_2})} \le \frac{(1-\theta)^{\sum_{i=1}^{k} \mathbbm{1}[t_i \in [0,T]]}(1-\theta_0)}{(1-\theta)^{\sum_{i=k+1}^{2k} \mathbbm{1}[t_i \in [0,T]]}(1-\theta_0)}
\end{align*}
We can use the above expressions in order to obtain an upper bound for Eq~(\ref{eq_main2}).
Thus, by Eq~(\ref{eq_theta_def}) we have
\begin{align*}
&\mathbb{KL}(\mathcal{P}_{\bm{t}|\pi} || \mathcal{P}_{\bm{t}|\pi'}) \\
& \le \log \sss\Bigg( \sss\max \bigg\{\sss\frac{1-(1-\theta)^k (1-\theta_0)}{\theta_0} \frac{\kappa_2}{\kappa_1}, \frac{1-\theta_0}{(1-\theta)^k(1-\theta_0)}\sss\bigg\} \sss\Bigg) \\
& = \log \sss\Bigg( \sss\max \bigg\{ \sss\frac{\kappa_2}{\kappa_1} \bigg( \frac{1}{\theta_0}-(1-\theta_0) \bigg), \frac{1}{\theta_0} \sss\bigg\} \sss\Bigg)
\end{align*}
\end{proof}

By using the above results, we show that the necessary number of samples for the network inference problem is also $\Omega(k \log p)$ in the continuous-time model.
\begin{theorem}
\label{thm_fano2}
Suppose that nature picks a ``true'' hypothesis $\bar\pi$ uniformly at random from some distribution of hypotheses with support $\mathcal{F}$. Then a dataset $S$ of $n$ independent samples $\bm{t^{(1)}}, \bm{t^{(2)}}, \ldots, \bm{t^{(n)}} \in ([0,T] \cup \{\infty\})^p \times ([T,2T] \cup \{\infty\})$ is produced, conditioned on the choice of $\bar\pi$. The learner then infers $\hat\pi$ from the dataset $S$. Assume that the transmission function $f(t; \pi)$, satisfies Condition~\ref{boundedness_cond} with constants $\kappa_1$ and $\kappa_2$. Under the settings of the two-layered continuous-time diffusion model, there exists a network inference problem of $k$ direct parent nodes such that if 
\begin{align*}
n \le \frac{k\log{p} - k\log{k} - 2\log{2}}{2 \log \Bigg( \max \bigg\{ \frac{\kappa_2}{\kappa_1} \bigg( \frac{1}{\theta_0}-(1-\theta_0) \bigg), \frac{1}{\theta_0} \bigg\} \Bigg)}
\end{align*}
then learning fails with probability at least $1/2$, i.e.,
\begin{align*}
\mathbb{P}[\hat\pi \ne \bar\pi] \ge \frac{1}{2}
\end{align*}
for any algorithm that a learner could use for picking $\hat\pi$.
\end{theorem}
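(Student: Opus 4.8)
The plan is to follow the proof of Theorem~\ref{thm_fano1} essentially line for line, replacing the discrete KL bound of Lemma~\ref{lemma_kl_bound1} with the continuous-time KL bound of Lemma~\ref{lemma_kl_bound2}. The observation that makes this transfer legitimate is that the hypothesis class $\mathcal{F}$ is still finite, with $|\mathcal{F}| = {p \choose k}$, so Fano's inequality applies verbatim; the fact that the samples now take values in the mixed discrete--continuous domain $\mathcal{T}$ is irrelevant, since the Yu pairwise bound~\cite{Yu97} and Fano's inequality~\cite{Cover06} depend on the observations only through mutual information and KL divergence, which are defined for arbitrary measures.

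First I would set up the reduction: nature draws $\bar\pi$ uniformly from $\mathcal{F}$, and the learner must identify it from $n$ i.i.d. samples. By the pairwise-KL bound on mutual information, together with the additivity of KL divergence over the $n$ independent samples,
\begin{align*}
\mathbb{I}(\bar\pi, S) & < \frac{1}{|\mathcal{F}|^2} \sum_{\pi \in \mathcal{F}} \sum_{\pi' \in \mathcal{F}} \mathbb{KL}(\mathcal{P}_{S|\pi} || \mathcal{P}_{S|\pi'}) \\
& = \frac{n}{|\mathcal{F}|^2} \sum_{\pi \in \mathcal{F}} \sum_{\pi' \in \mathcal{F}} \mathbb{KL}(\mathcal{P}_{\bm{t}|\pi} || \mathcal{P}_{\bm{t}|\pi'})
\end{align*}
Then I would apply Lemma~\ref{lemma_kl_bound2} to each of the $|\mathcal{F}|^2$ summands, which are all bounded by the same quantity, to obtain
\begin{align*}
\mathbb{I}(\bar\pi, S) < n \log\Bigg( \max\bigg\{ \frac{\kappa_2}{\kappa_1}\bigg( \frac{1}{\theta_0} - (1-\theta_0) \bigg), \frac{1}{\theta_0} \bigg\} \Bigg)
\end{align*}

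Next I would plug this into Fano's inequality and use the standard estimate $\log {p \choose k} \ge k(\log p - \log k)$:
\begin{align*}
\mathbb{P}[\hat\pi \ne \bar\pi] & \ge 1 - \frac{\mathbb{I}(\bar\pi, S) + \log 2}{\log {p \choose k}} \\
& \ge 1 - \frac{n \log\big( \max\{ \cdots \} \big) + \log 2}{k(\log p - \log k)}
\end{align*}
where $\max\{ \cdots \}$ abbreviates the quantity appearing in Lemma~\ref{lemma_kl_bound2}. Setting the right-hand side equal to $1/2$ and solving the resulting equation for $n$ yields exactly the stated threshold; for every $n$ at or below it the error probability is at least $1/2$, regardless of which estimator $\hat\pi$ the learner uses.

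The main point is that essentially all of the technical content has already been absorbed into Lemma~\ref{lemma_kl_bound2}, so no substantive obstacle remains in the theorem itself. The only step deserving a second look is the measure-theoretic one flagged above --- namely that the information-theoretic machinery carries over unchanged from the finite-observation setting of Theorem~\ref{thm_fano1} to the continuous-observation setting here --- and this is immediate because the hypothesis space stays finite and mutual information and KL divergence are well-defined for general probability measures on $\mathcal{T}$.
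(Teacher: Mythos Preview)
Your proposal is correct and follows essentially the same approach as the paper: bound the mutual information via the Yu pairwise-KL inequality, apply Lemma~\ref{lemma_kl_bound2}, invoke Fano's inequality together with $\log{p \choose k} \ge k(\log p - \log k)$, and solve for $n$. Your added remark about the measure-theoretic legitimacy of carrying the argument to the continuous-observation setting is extra commentary the paper omits, but the core steps are identical.
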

\begin{proof}
The proof is very similar to that of Theorem~\ref{thm_fano1}.
First, by the pairwise KL-based bound \cite{Yu97} and Lemma~\ref{lemma_kl_bound2}, we have
\begin{align}
\mathbb{I}(\bar\pi, S) < n \log \Bigg( \max \bigg\{ \frac{\kappa_2}{\kappa_1} \bigg( \frac{1}{\theta_0}-(1-\theta_0) \bigg), \frac{1}{\theta_0} \bigg\} \Bigg) \label{eq_mut_info2}
\end{align}
By Fano's inequality~\cite{Cover06}, Eq~(\ref{eq_mut_info2}), and the well-known bound, $\log{p \choose k} \ge k(\log{p} - \log{k})$, we have
\begin{align*}
&\hspace{-0.1in}\mathbb{P} [\hat{f} \ne \bar{f}] \\
& \ge 1- \frac{n \log \Bigg( \max \bigg\{ \frac{\kappa_2}{\kappa_1} \bigg( \frac{1}{\theta_0}-(1-\theta_0) \bigg), \frac{1}{\theta_0} \bigg\} \Bigg) + \log{2}}{\log{p \choose k}} \\
& \ge 1- \frac{n \log \Bigg( \max \bigg\{ \frac{\kappa_2}{\kappa_1} \bigg( \frac{1}{\theta_0}-(1-\theta_0) \bigg), \frac{1}{\theta_0} \bigg\} \Bigg) + \log{2}}{k(\log{p} - \log{k})} \\
& = \frac{1}{2}
\end{align*}
By solving the last equality we conclude that, if $n \le \frac{k\log{p} - k\log{k} - 2\log{2}}{2 \log \Bigg( \max \bigg\{ \frac{\kappa_2}{\kappa_1} \bigg( \frac{1}{\theta_0}-(1-\theta_0) \bigg), \frac{1}{\theta_0} \bigg\} \Bigg)}$, then any conceivable algorithm will fail with a large probability, $\mathbb{P}[\hat{\pi}~\ne~\bar{\pi}] \ge 1/2$.
\end{proof} \par

Lastly, we will present an example for the exponentially distributed transmission function.
\begin{corollary}[Exponential Distribution]
Suppose that nature picks a ``true'' hypothesis $\bar\pi$ uniformly at random from some distribution of hypotheses with support $\mathcal{F}$. Then a dataset $S$ of $n$ independent samples $\bm{t^{(1)}}, \bm{t^{(2)}}, \ldots, \bm{t^{(n)}} \in ([0,T] \cup \{\infty\})^p \times ([T,2T] \cup \{\infty\})$ is produced, conditioned on the choice of $\bar\pi$. The learner then infers $\hat\pi$ from the dataset $S$. Assume that the transmission function $f(t; \pi) = \frac{\lambda e^{-\lambda t}}{1 - e^{-\lambda T}}$ is of the censored (rescaled) exponential distribution form, defined over [0,T]. Under the settings of the two-layered continuous-time diffusion model, there exists a network inference problem of $k$ direct parent nodes such that if
\begin{align*}
n \le \frac{k\log{p} - k\log{k} - 2\log{2}}{2 \log \Bigg( \max \bigg\{ e^{\lambda T} \bigg( \frac{1}{\theta_0}-(1-\theta_0) \bigg), \frac{1}{\theta_0} \bigg\} \Bigg)}
\end{align*}
then learning fails with probability at least $1/2$, i.e.,
\begin{align*}
\mathbb{P}[\hat\pi \ne \bar\pi] \ge \frac{1}{2}
\end{align*}
for any algorithm that a learner could use for picking $\hat\pi$.
\end{corollary}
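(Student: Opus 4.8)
The plan is to instantiate Theorem~\ref{thm_fano2} with the censored exponential transmission function, so the only real work is to verify Condition~\ref{boundedness_cond} and identify the constants $\kappa_1,\kappa_2$. First I would confirm that $f(t;\pi) = \frac{\lambda e^{-\lambda t}}{1 - e^{-\lambda T}}$ is indeed a valid probability density on $[0,T]$ by checking $\int_0^T f(t;\pi)\,dt = \frac{1}{1-e^{-\lambda T}}\int_0^T \lambda e^{-\lambda t}\,dt = \frac{1-e^{-\lambda T}}{1-e^{-\lambda T}} = 1$, which also clarifies the role of the normalizing constant in the denominator.

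Next, I would observe that $t \mapsto \lambda e^{-\lambda t}$ is strictly decreasing on $[0,T]$ for $\lambda > 0$, and since the normalization $\frac{1}{1-e^{-\lambda T}}$ is a positive constant, $f(t;\pi)$ is strictly decreasing in $t$ as well. Hence its minimum over $[0,T]$ is attained at $t = T$ and its maximum at $t = 0$, giving
\begin{align*}
\kappa_1 = \min_{t \in [0,T]} f(t;\pi) = \frac{\lambda e^{-\lambda T}}{1-e^{-\lambda T}} > 0, \qquad
\kappa_2 = \max_{t \in [0,T]} f(t;\pi) = \frac{\lambda}{1-e^{-\lambda T}} < \infty,
\end{align*}
so Condition~\ref{boundedness_cond} holds with these finite positive constants. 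Taking the ratio, the normalization factors and the $\lambda$ cancel, leaving $\frac{\kappa_2}{\kappa_1} = \frac{1}{e^{-\lambda T}} = e^{\lambda T}$.

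Finally, I would substitute $\frac{\kappa_2}{\kappa_1} = e^{\lambda T}$ into the sample-complexity bound of Theorem~\ref{thm_fano2}, which replaces the factor $\frac{\kappa_2}{\kappa_1}\big(\frac{1}{\theta_0}-(1-\theta_0)\big)$ by $e^{\lambda T}\big(\frac{1}{\theta_0}-(1-\theta_0)\big)$ and yields exactly the stated threshold; the conclusion $\mathbb{P}[\hat\pi \ne \bar\pi] \ge 1/2$ then follows immediately from Theorem~\ref{thm_fano2}. There is essentially no obstacle here beyond the monotonicity observation and the cancellation in the ratio $\kappa_2/\kappa_1$; the corollary is a direct specialization, so the proof is short and the main point is simply to make the computation of $\kappa_1,\kappa_2$ explicit.
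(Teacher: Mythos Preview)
Your proposal is correct and follows essentially the same route as the paper: identify $\kappa_1$ and $\kappa_2$ for the censored exponential density, compute $\kappa_2/\kappa_1 = e^{\lambda T}$, and plug into Theorem~\ref{thm_fano2}. You add a brief justification of the monotonicity of $f$ and a check that it integrates to one, which the paper omits, but the argument is otherwise identical.
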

\begin{proof}
Since the probability density function should only be defined between $[0,T]$, we need to rescale the probability density function of the standard exponential distribution, $g(t)~\sim~Exp(\lambda)$, whose cumulative density function is $G(t)$. Given this, we have the censored (rescaled) transmission function,
\begin{align*}
f(t; \pi) = \frac{g(t)}{G(T) - G(0)} = \frac{g(t)}{G(T)} = \frac{\lambda e^{-\lambda t}}{1 - e^{-\lambda T}}
\end{align*}
From the above, we can obtain the minimum and maximum values of the density function, $\kappa_1$ and $\kappa_2$, in Condition~\ref{boundedness_cond} as follows.
\begin{align}
\kappa_1 = \frac{\lambda e^{-\lambda T}}{1 - e^{-\lambda T}}&&, && \kappa_2 = \frac{\lambda}{1 - e^{-\lambda T}} && \Rightarrow && \frac{\kappa_2}{\kappa_1} = e^{\lambda T} \label{eq_exponential_kappa}
\end{align}
Finally using Theorem~\ref{thm_fano2} and Eq~(\ref{eq_exponential_kappa}), we show that if
\begin{align*}
n \le \frac{k\log{p} - k\log{k} - 2\log{2}}{2 \log \Bigg( \max \bigg\{ e^{\lambda T} \bigg( \frac{1}{\theta_0}-(1-\theta_0) \bigg), \frac{1}{\theta_0} \bigg\} \Bigg)}
\end{align*}
then any conceivable algorithm will fail with a large probability, $\mathbb{P}[\hat{\pi} \ne \bar{\pi}] \ge 1/2$.
\end{proof}

\section{Conclusion}
We have formulated the two-layered discrete-time and continuous-time diffusion models and derived the information-theoretic lower bounds of the sample complexity of order $\Omega (k \log p)$. Our bound is particularly important since we can infer that the algorithm in \cite{pouget2015inferring}, which only works under discrete-time settings, is statistically optimal based on our bound. \par
Our work opens the question of whether it is possible to devise an algorithm for which the sufficient number of samples is $\mathcal{O}(k \log p)$ in continuous-time settings. We also have observed some potential future work to analyze sharp phase transitions for the sample complexity of the network inference problem.

\bibliography{./mybib}

\end{document}